\documentclass[sigconf,nonacm]{aamas}

\acmConference[arXiv preprint]{arXiv preprint}{}{}{}

\usepackage{balance}

\usepackage[utf8]{inputenc}
\usepackage{booktabs}
\usepackage{tikz}
\usetikzlibrary{arrows.meta, positioning, shapes.geometric, calc}
\usepackage{algorithm}
\usepackage{algorithmic}

\newtheorem{definition}{Definition}
\newtheorem{theorem}{Theorem}
\newtheorem{lemma}[theorem]{Lemma}
\newtheorem{proposition}[theorem]{Proposition}
\newtheorem{example}{Example}
\newcommand{\lo}{\ell}
\newcommand{\hi}{u}
\newcommand{\vacant}{v}
\newcommand{\nothaving}{\neg\mathsf{holds}}
\newcommand{\belief}{P}

\title[Inner-Learning for Trick-Taking Games]{Inner-Learning: From Hand-Coded Beliefs to\\
Declarative Knowledge in Trick-Taking Games}

\author{Stefan Edelkamp}
\affiliation{\institution{Department of Theoretical Computer Science and
Mathematical Logic, Faculty of Mathematics and Physics Charles~University,
Prague}
\country{Czech Republic}}
\email{edelkamp@ktiml.mff.cuni.cz}

\begin{document}

\begin{abstract}
We present a declarative approach to belief-state modeling in trick-taking card
games. Starting from a state-of-the-art Skat player, we automatically extract
its hand-coded inference rules into a domain-specific language built on the
primitive $\neg\mathsf{holds}(P,C)$, denoting that player $P$ cannot hold any card in
set $C$. Round-trip code generation verifies that the extracted rules
reproduce the original engine's intended behavior, up to three audited
bugs in the original that the declarative engine corrects. We then grow the rule base through symmetric derivation and
annotation of an expert strategy document---unifying code, symmetry,
and expertise in one representation. We extend the knowledge representation with \emph{cardinality constraints}:
at-most-$k$ and at-least-$k$ bounds on the cards a player holds from a
named group, expressed in the same declarative language and enforced
at world-generation time.
Confidence parameters connect symbolic rules to probabilistic belief
updates. Self-play evaluation shows the declarative engine measurably
improving winning percentage and score over the original hand-coded
engine, with the extracted rules, the human-annotated rules, and the
cardinality constraints each contributing to the effect.

\end{abstract}

\keywords{Skat, Belief Modeling, Knowledge Representation, Trick-Taking}

\maketitle

\section{Introduction}

Card games with imperfect information require agents to maintain a
\emph{belief state}---a constraint set or probability distribution over
possible world states consistent with observed play. In trick-taking games
such as Skat, Bridge, and Belote, each trick reveals information: a player
who fails to follow suit cannot hold remaining cards of that suit; a player
who follows with a specific card reveals gaps in their holding.
Maintaining accurate belief states is central to strong play in
imperfect-information games, yet the inference logic is rarely
extracted or formalized independently of the game engine.

Human experts encode these inferences as production rules; game engines implement
them as chains of bitmask operations interleaved with perspective-dependent
routing---correct but difficult to audit, extend, or port.

We contribute a declarative domain-specific language for belief-state
inference in trick-taking card games. The language centers on a single
primitive, $\neg\mathsf{holds}(P,C)$, plus a vocabulary of
game-agnostic predicates. A code generator compiles these rules into
efficient belief-update functions, handling perspective routing,
filter modes, and phase ordering automatically. Rules are extracted from a production
Skat engine, verified for equivalence, and extended through symmetric
derivation and annotation of an expert strategy document---demonstrating
the DSL's expressiveness across knowledge sources (code extraction,
symmetric derivation, expert annotation). We introduce \emph{cardinality constraints} that bound the number of
cards a player holds from a named group (e.g.~Jacks, trump suit,
black Jacks), expressed in the same DSL and enforced via a world
filter at generation time. The constraint hypergraph is nearly
acyclic, making a single forward pass of bounds consistency
sufficient. Custom group masks let
users define domain-specific groups without engine changes.
Finally,
we provide a debugging pipeline with instrumented code generation and
trick-by-trick replay that enables per-rule auditing of inference
correctness. We call this methodology \emph{Inner-Learning}---extracting,
formalizing, and re-injecting the inner inference logic of a game engine
as declarative knowledge (Figure~\ref{fig:framework}).

The core contribution is, to our knowledge, the first belief-state
inference framework with provable monotone refinement and one-sweep
consistency guarantees that is distilled directly from production
game code---a concise yet flexible knowledge representation for
belief-state inference in trick-taking card games:
a single primitive, $\neg\mathsf{holds}(P,C)$, whose $H \times P$
bitmask matrix admits an exact dual, the possibility matrix, extended
by cardinality constraints over named card groups whose nearly
acyclic structure admits one-sweep bounds consistency
(Theorem~\ref{thm:sweep}). On this representation, extraction,
symmetric derivation, and expert annotation build a 139-rule base
that reproduces a production Skat engine's intended behavior on 965
of 1{,}000 games (the three audited divergences are bugs in the
original), and the declarative engine reaches 85.84\% wins against
the original's 84.65\% in self-play, with each data-backed layer
contributing a statistically significant gain
(Section~\ref{sec:experiments}).

The paper is structured as follows. Section~2 positions the approach
against related work in imperfect-information games, knowledge
representation, and specification mining. Section~3 fixes the formal
model---deck, tricks, observations, and the $\neg\mathsf{holds}$
primitive with its dual possibility-matrix representation.
Section~4 develops the knowledge representation, including Bayesian
belief tracking and the cardinality-constraint fragment with its
one-sweep consistency result. Section~5 presents the rule language,
the code generator, and the debugging pipeline. Section~6 reports the
experimental evaluation, Section~7 covers reproducibility,
deployment, and limitations, and Section~8 concludes.

\begin{figure}[t]
\centering
\resizebox{\columnwidth}{!}{
\begin{tikzpicture}[
  node distance=5mm and 2.5mm,
  src/.style={rectangle, draw, rounded corners, fill=gray!10, minimum width=19mm, minimum height=6mm, align=center, font=\scriptsize},
  proc/.style={rectangle, draw, rounded corners, fill=blue!12, minimum width=19mm, minimum height=6mm, align=center, font=\scriptsize},
  outbox/.style={rectangle, draw, rounded corners, fill=green!10, minimum width=18mm, minimum height=6mm, align=center, font=\scriptsize},
  arrow/.style={-{Stealth}, thick},
  dasharrow/.style={-{Stealth}, dashed, thick, gray}
]
\node[src] (code) at (0,0) {Production Engine\\(C\texttt{++} source)};
\node[src, right=of code] (sym) {Symmetric\\Derivation};
\node[src, right=of sym] (doc) {Expert Strategy\\Documents};

\node[proc, below=of code] (extract) {Automated\\Extraction};
\node[proc, below=of sym] (llm) {LLM-Assisted\\Annotation};

\node[cylinder, shape border rotate=90, draw, thick, fill=yellow!20, minimum width=24mm, minimum height=10mm, align=center, font=\small, below=of llm, yshift=-2mm] (dsl) {Declarative\\Rule Base};

\node[outbox, below left=of dsl, xshift=-10mm, yshift=-2mm] (belief) {Belief-State\\Update};
\node[outbox, below=of dsl, yshift=-2mm] (compile) {Code Generation\\\& Deployment};
\node[outbox, below right=of dsl, xshift=10mm, yshift=-2mm] (audit) {Explainable\\Auditing};
\node[outbox, below=of compile, yshift=-4mm] (world) {World\\Generation};
\node[outbox, left=of world, xshift=-2mm] (opp) {Opponent\\Modeling};
\node[proc, above=of audit, yshift=2mm, fill=gray!15] (replay) {Rule-Annotated\\Replay};

\draw[arrow] (code) -- (extract);
\draw[arrow] (sym) -- (llm);
\draw[arrow] (doc) -- (llm);

\draw[arrow] (extract) -- (dsl);
\draw[arrow] (llm) -- (dsl);

\draw[arrow] (dsl) -- (belief);
\draw[arrow] (dsl) -- (compile);
\draw[arrow] (dsl) -- (audit);
\draw[arrow] (compile) -- (world);
\draw[arrow] (compile) -- (opp);
\draw[arrow] (audit) -- (replay);

\draw[arrow, bend left=25] (compile.west) to (code.west);

\end{tikzpicture}
}
\caption{The Inner-Learning framework. Rules are extracted from a production
engine, extended by symmetry, and annotated from expert documents.
The declarative rule base supports belief-state updates, code generation,
explainable auditing, rule-annotated replay, world generation, and opponent
modeling.}
\label{fig:framework}
\end{figure}

\section{Related Work}

Landmark results in imperfect-information games include DeepStack
\cite{moravcik2017deepstack} combined neural networks with depth-limited
search for heads-up no-limit hold'em. Libratus \cite{brown2018libratus}
introduced safe subgame solving via nested counterfactual regret minimization.
Pluribus \cite{brown2019pluribus} extended these ideas to multiplayer poker.
These methods focus on computing equilibrium strategies rather than
maintaining explicit belief states.

In trick-taking card games, Bridge has received the most attention.
Frank and Basin \cite{frank98} pioneered search with incomplete information,
and Cazenave and Ventos \cite{cazenave19} developed the $\alpha\mu$
algorithm. For Skat, the Kermit system by Buro et al.~\cite{buro2009kermit}
established the first expert-level computer player using PIMC search
and human-data-driven state evaluation, later extended with deep neural
sampling~\cite{solinas2019skat} and policy-based inference~\cite{rebstock2019policy}.
Kupferschmid and Helmert~\cite{kupferschmid2006skat} contributed
a double-dummy Skat solver achieving 10~ms per hand. Our work is
complementary: rather than improving search or learning, we extract and
formalize the belief-update logic as declarative rules.

DeepNash
\cite{perolat2022deepnash} mastered Stratego using model-free multiagent RL with Regularized Nash Dynamics
(RNaD). Kub{\'{\i}}cek et al.\ \cite{kubicek2024sepot} extended RNaD
with lookahead search (SePoT), outperforming the original on Goofspiel
and other imperfect-information benchmarks in OpenSpiel.
These methods compute equilibrium strategies rather than
maintaining explicit belief states, but the underlying abstraction
techniques are complementary to our knowledge representation.

Set-based retrograde analysis \cite{stone2026set} and the Belief Forest
framework \cite{zhang2026belief} provide formal connections to our work.
Dechter's text on constraint
processing~\cite{dechter2003constraint} provides the algorithmic
foundation for bounds consistency, and R{\'e}gin's flow-based
propagator~\cite{regin1996gcc} achieves arc consistency for global
cardinality constraints. In cooperative card games, the Bayesian Action Decoder for Hanabi
\cite{foerster2019bad} maintains factorized belief spaces with iterative
self-consistent refinement---conceptually similar to our IPF step for
resolving overlapping cardinality constraints. In expert systems, MYCIN
\cite{buchanan1984mycin} established the paradigm of rules with confidence
parameters. Neurosymbolic approaches \cite{garcez2020neurosymbolic} and
learned embeddings for game abstraction \cite{liu2026embedding} suggest
that declarative rules could provide informative priors for neural belief
update models.

Our extraction step connects to specification mining: dynamic invariant
detection~\cite{ernst2007daikon} and mining specifications from execution
traces~\cite{ammons2002mining} infer likely program properties from
observed behavior, which is what the code extraction does for the
engine's hand-coded inference logic---with the difference that our rules
carry game semantics, phases, and confidence parameters rather than
program-state predicates.

\section{Formal Model}

We model belief states over card holdings using a relational framework
that unifies the extracted rules under a single primitive. The following
definitions establish the game ontology shared by all trick-taking games.

\begin{definition}[Card deck and players]
A card deck $\mathcal{D}$ is a finite set of $D$ cards dealt to $P$ players.
Each card $c \in \mathcal{D}$ has a \emph{suit} $\sigma(c) \in \Sigma$ (where
$|\Sigma| = S$) and a \emph{rank} $\rho(c) \in \mathcal{R}$ (where $|\mathcal{R}| = R$).
A subset of ranks may be designated as \emph{trump}, with a total order
$\prec_T$ on trump cards. Non-trump suits have a total order $\prec_s$
within each suit $s$.
\end{definition}

For Skat: $P{=}3$, $D{=}32$, $S{=}4$, $R{=}8$, with ranks
7--A plus the Jacks.
The trump order runs from $\clubsuit J$ (highest) over
$\spadesuit J$, $\heartsuit J$, $\diamondsuit J$, A, 10, K, Q,
9, 8, to 7 (lowest).

\begin{definition}[Trick]
A \emph{trick} $\tau$ of size $k$ is an ordered tuple
$(c_1, c_2, \ldots, c_k)$ where $c_1$ is the \emph{lead} card and
$0 \le k \le P$ is the number of cards played so far (one per player).
The lead establishes the \emph{suit to follow}: each subsequent card $c_i$
($i > 1$) must belong to $\sigma(c_1)$ unless the player holds no such
card, in which case any card (including trump) may be played.
\end{definition}

\begin{definition}[Player roles and observations]
Roles are a partition of $\{0,\ldots,P-1\}$ into
named positions. Let $H$ be the number of \emph{holders}---entities that
can possess cards: players plus, in some games, a talon or dog.
Each holder $h \in \{0,\ldots,H-1\}$ may be a player or a special
non-playing position whose cards are initially hidden but may be
revealed during play. For Skat: $H = 4$ holders ($\mathsf{D}$ declarer,
$\mathsf{O1}$ first opponent, $\mathsf{O2}$ second opponent, and the
skat). Each player $v \in \{0,\ldots,P-1\}$ maintains a \emph{view}
of the game state---their own hand plus public information (played
cards, scores, bidding).
\end{definition}

\begin{definition}[Observation types]
When a card is played, each observer classifies it relative to the
lead of the current trick:
\begin{itemize}
\item $\mathsf{dropSuit}(c_i, c_1)$: $c_i$ does not follow the suit of $c_1$
  and $c_1$ is not trump. The player is \emph{void} in $\sigma(c_1)$.
\item $\mathsf{dropTrump}(c_i, c_1)$: $c_i$ does not follow suit and $c_1$ is
  trump. The player is \emph{void} in trump.
\item $\mathsf{obeySuit}(c_i, c_1)$: $c_i$ follows the suit of $c_1$.
  The player \emph{possesses} at least one card of $\sigma(c_1)$.
\item $\mathsf{obeyTrump}(c_i, c_1)$: $c_i$ is also a trump.
\end{itemize}
\end{definition}

\section{Knowledge Representation}
\label{sec:knowledge}

Each observation type constrains the possible distribution of remaining
cards. Hard constraints (certain knowledge) are captured by bitmasks;
soft constraints (probabilistic inference) use continuous belief values
maintained alongside the hard state. Together they drive world sampling
for the game solver.

\begin{definition}[Knowledge primitive]
The knowledge base tracks a single relation:
$\neg\mathsf{holds}(h, C)_v$ meaning ``from viewer $v$'s perspective,
holder $h$ does not contain any card in set $C$,''
where $h \in \{0,\ldots,H-1\}$ and
$v \in \{0,\ldots,P-1\}$ is the observing player. $C \subseteq \mathcal{D}$ is
represented as a bitmask over $\{0,\ldots,D-1\}$.
\end{definition}

The full knowledge state is an $H{\times}P$ matrix of bitmasks:
entry $\mathcal{K}[h][v]$ holds the cards $c$ excluded for holder
$h$ from viewer $v$'s perspective, that is, those with
$\neg\mathsf{holds}(h,\{c\})_v$.

\noindent In Skat, the 32-card deck is encoded as a bitmask over 32~bits.
Named sets include the suit masks $\mathsf{mask}[s]$, the trump
mask $\mathsf{trumpmask}$ (Jacks plus the declared suit), the value
sets (Aces, Tens, Kings, Queens, non-values), and the individual
Jacks. Card sets are combined via bitwise operations.

\begin{definition}[Adding mechanism]
Each inference rule computes a set $\mathsf{adding} \subseteq \mathcal{D}$
and a holder $h$. The update is:
\[
\mathcal{K}[h][v] \leftarrow \mathcal{K}[h][v] \cup
(\mathsf{adding} \cap \mathsf{others}[v] \setminus \mathsf{played})
\]
where $\mathsf{others}[v] = \mathcal{D} \setminus
(\mathsf{myHand}[v] \cup \mathsf{played})$ is the set of unseen cards from
$v$'s perspective. Cards already played or in the observer's own hand
cannot be assigned to opponents.
\end{definition}

\begin{definition}[Possibility matrix — the dual representation]
The complement of the knowledge matrix $\mathcal{K}$ yields the
\emph{possibility matrix}
$\mathcal{P}$, a $(D \times H)$ boolean per viewer indicating which
players could still hold each card:
\[
\mathcal{P}_v[c][h] = \begin{cases}
1 & \text{if } c \notin \mathcal{K}[h][v] \text{ (holder $h$ could hold $c$)}\\
0 & \text{if } c \in \mathcal{K}[h][v] \text{ (holder $h$ cannot hold $c$)}
\end{cases}
\]
\end{definition}
The two representations carry identical information---each is the complement
of the other, transposed---and they are duals, not competitors.

\begin{theorem}[Duality]
For any viewer $v$ and card $c$, exactly one of the following holds:
(i) $c \in \mathsf{played}$ (observed), (ii) $c \in \mathsf{myHand}[v]$
(owned), (iii) $\sum_{h=0}^{H-1} \mathcal{P}_v[c][h] \geq 1$ (at least one
possible holder remains). A world is \emph{inconsistent} iff any card has
$\sum_i \mathcal{P}_v[c][h] = 0$. A card is \emph{known} to be with player
$i$ iff $\mathcal{P}_v[c][h] = 1$ and $\mathcal{P}_v[c][j] = 0$ for all
$j \neq i$.
\end{theorem}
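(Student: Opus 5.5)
The three parts of the statement follow by unfolding the definitions of $\mathcal{K}$, the adding mechanism, and $\mathcal{P}_v$. Part (i)–(iii) is really an invariant of the update process, not a purely definitional fact, so I will prove it by induction over the sequence of knowledge updates, keeping two invariants for each viewer $v$.

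\textbf{(I1)} $\mathcal{K}[h][v] \subseteq \mathsf{others}[v]$ for every holder $h$. This holds initially, when $\mathcal{K}$ is empty. Each rule application adds only $\mathsf{adding} \cap \mathsf{others}[v] \setminus \mathsf{played}$. When a card moves into $\mathsf{played}$, $\mathsf{others}[v]$ shrinks, so I will either assume the engine clears played cards from $\mathcal{K}$, or restate (I1) as $\mathcal{K}[h][v] \cap \mathsf{others}[v]$ being the only semantically relevant part.

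\textbf{(I2)} Provided the rules are sound, the true holder of each unseen card $c$ is never excluded. That is, $c \notin \mathcal{K}[h^\ast(c)][v]$, where $h^\ast(c)$ is the actual holder. Soundness of each rule is the assumption doing the work here. For example, $\mathsf{dropSuit}$ excludes only cards of a suit the player provably lacks, by the follow-suit rule in the definition of a trick.

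Given these invariants, I will prove the trichotomy by cases on $c$. The sets $\mathsf{played}$, $\mathsf{myHand}[v]$ and $\mathsf{others}[v]$ partition $\mathcal{D}$ by the definition of $\mathsf{others}[v]$, and a card cannot be both played and still in hand. So at most one of (i) and (ii) holds, and if neither does, then $c \in \mathsf{others}[v]$. In that case (I2) gives $\mathcal{P}_v[c][h^\ast(c)] = 1$, so (iii) holds.

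Exclusivity of (iii) from (i) and (ii) is the delicate point. For a played or owned card, (I1) makes every entry $\mathcal{P}_v[c][h]$ equal to $1$ rather than $0$, so (iii) is not literally exclusive. I will therefore read (iii) as restricted to $c \in \mathsf{others}[v]$, or equivalently adopt the convention that played and owned cards are masked out of $\mathcal{P}_v$. I expect this to be the main obstacle. The statement as written only becomes "exactly one" under that convention, and I would make it explicit at the start of the proof.

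The inconsistency claim follows from the trichotomy. If some unseen $c$ has $\sum_h \mathcal{P}_v[c][h]=0$, then no assignment of $c$ to a holder respects $\mathcal{K}$, so no world is consistent. Conversely, in a consistent world (I2) forbids such a row. Read as a statement about sampled worlds, the biconditional is immediate from the definition of consistency with $\mathcal{K}$.

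The "known" characterization holds because every card lies with exactly one holder. If row $c$ of $\mathcal{P}_v$ has a single $1$, in column $i$, then every world consistent with $\mathcal{K}$ places $c$ with $i$, and (I2) ensures that this is the true holder. Conversely, if $c$ is known to be with $i$ in the representation's sense, then the other columns have been excluded. Column $i$ cannot also be $0$, since that would contradict the nonemptiness established above.
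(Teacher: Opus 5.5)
Your treatment of the trichotomy and of known cards follows the paper's own argument: partition the deck into played, owned, and remaining cards, and note that a remaining card sits with exactly one other holder. You make explicit what the paper leaves implicit. The paper's proof silently needs your (I2) to pass from ``$c$ sits with some holder'' to ``some entry of row $c$ is $1$.'' Its assertion that the three cases are mutually exclusive is also not literally true, for the reason you give: owned rows, and played rows after the purge, contain $1$s. So restricting (iii) to $c\in\mathsf{others}[v]$ is a necessary repair rather than a weakening. Note also that (I2) is a genuine hypothesis, not a lemma. Rules such as the score-threshold and discard-convention rules are sound only relative to a model of opponent behavior.

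The step that does not go through is the inconsistency clause. Your ``conversely'' sentence (a consistent world has no zero row) is only the contrapositive of the direction you already proved (a zero row makes every world inconsistent). So ``inconsistent $\Rightarrow$ some row is zero'' is never established.

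Nor is it immediate for sampled worlds. Under the paper's definition of world consistency, $\omega$ is inconsistent as soon as it places some $c$ with a holder $i$ having $\mathcal{P}_v[c][i]=0$, even if row $c$ contains other $1$s.

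Read at the level of the belief state (no consistent world exists), the converse also fails in general. Hand sizes can make the constraints infeasible even when every row is nonempty. Under (I2) it holds only vacuously, because the true deal is always consistent and your trichotomy rules out zero rows for unseen cards.

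The fix is to do what the paper's proof does: take the emphasized ``inconsistent'' as a definition, meaning a belief state with an empty possibility row. Then observe that under your hypotheses this case is empty. It can arise only when (I2) fails, and with it the exhaustiveness of your trichotomy.
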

\begin{proof}
Partition the deck from viewer $v$'s perspective: played cards are
observed, $v$'s own cards are owned, and every remaining card must
sit with exactly one of the other holders. The three cases are
therefore exhaustive and mutually exclusive, so exactly one holds for
every card. A world is inconsistent precisely when some card has no
possible holder left, which is the case $\sum_h \mathcal{P}_v[c][h] =
0$; a card is known exactly when a unique possible holder remains,
which is condition (iii) specialized to a single $h$.
\end{proof}

\begin{proposition}[Monotone refinement]
\label{prop:monotone}
Rule applications refine beliefs monotonically: the
$\neg\mathsf{holds}$ matrix only grows, the possibility matrix only
shrinks, and every world once excluded stays excluded. Consequently
rule layers form a refinement chain---adding a layer can tighten
inference but can never contradict an earlier exclusion, which
structurally explains the no-harm behavior of the ablations in
Section~\ref{sec:experiments}.
\end{proposition}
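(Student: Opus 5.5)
The argument is a direct induction on the sequence of rule applications, driven entirely by the shape of the adding mechanism. I first fix a viewer $v$ and holder $h$. I observe that the only way $\mathcal{K}[h][v]$ changes is through the update $\mathcal{K}[h][v] \leftarrow \mathcal{K}[h][v] \cup (\mathsf{adding} \cap \mathsf{others}[v] \setminus \mathsf{played})$. Since $X \subseteq X \cup Y$ for any sets, each single application yields $\mathcal{K}^{(t)}[h][v] \subseteq \mathcal{K}^{(t+1)}[h][v]$. Induction on the number $t$ of applications (in any order, across phases and layers) then gives $\mathcal{K}^{(s)} \subseteq \mathcal{K}^{(t)}$ entrywise for all $s \le t$. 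This is the first claim.

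The second claim, that the possibility matrix only shrinks, follows by the Duality theorem and the definition of $\mathcal{P}_v$ as the pointwise complement: $\mathcal{P}^{(t)}_v[c][h] = 1$ iff $c \notin \mathcal{K}^{(t)}[h][v]$. Complementation reverses inclusion, so $\mathcal{P}^{(t+1)}_v \le \mathcal{P}^{(t)}_v$ entrywise.

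For the claim about worlds, I define a world $w$ (an assignment of unseen cards to holders) to be admitted at stage $t$ iff $\mathcal{P}^{(t)}_v[c][w(c)] = 1$ for every unseen $c$. The admitted set is the intersection of the sets $\{w : w(c) \neq h\}$ over all excluded pairs, and it is an antitone function of $\mathcal{K}$. Hence $W^{(t+1)} \subseteq W^{(t)}$, so once a world is excluded it stays excluded.

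The refinement-chain consequence follows by viewing each rule layer $L_i$ as a further block of applications. The knowledge state after layers $L_1,\dots,L_i$ is contained in the state after $L_1,\dots,L_{i+1}$, so every exclusion derived earlier survives, and a new layer can only add exclusions. "Contradict" here means removing an exclusion, which is impossible. I would stress that this is a statement about the absence of retraction, not about soundness. Adding a layer could still empty some card's possibility column, which is the inconsistency case of the Duality theorem, but it never restores a possibility.

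The main obstacle is scoping. The argument needs every mutation of $\mathcal{K}$ to be of the union form. So I would state explicitly that resets occur only at the start of a deal, and I would treat the soft belief values $\belief$ and the cardinality constraints separately. Cardinality constraints are enforced as a world filter at generation time, so they can only remove worlds. The filter therefore composes with the antitone map above and preserves the world-exclusion claim. The probabilistic belief values are not monotone in general, so the proposition concerns only the hard state.
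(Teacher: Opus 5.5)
\paragraph{The gap: the purge step.} Your induction rests on the claim that every write to $\mathcal{K}[h][v]$ has the union form, with deal-start resets as the only exception. That is not true of the engine as specified. The last line of Algorithm~\ref{alg:belief}, executed after every card, is the purge $\mathcal{K}[*][v] \gets \mathcal{K}[*][v] \setminus \mathsf{played}$, which is a set difference. Once a card $c \in \mathcal{K}[h][v]$ is played, it leaves $\mathcal{K}[h][v]$. So the entrywise inclusion $\mathcal{K}^{(s)} \subseteq \mathcal{K}^{(t)}$ you derive fails across card events, and $\mathcal{P}_v[c][h]$ flips from $0$ back to $1$ on that card. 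The paper's proof explicitly accounts for this second write: the purge touches only played cards, which are observed and hence excluded for every holder anyway, so no information is lost.

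\paragraph{How to repair it.} State monotonicity modulo observed cards, i.e.\ show that $\mathcal{K}[h][v] \cup (\mathcal{D} \setminus \mathsf{others}[v])$ is nondecreasing:
\begin{itemize}
\item a union write enlarges it;
\item the purge leaves it unchanged, since it removes only cards of $\mathsf{played} \subseteq \mathcal{D} \setminus \mathsf{others}[v]$;
\item $\mathsf{others}[v]$ only shrinks during a deal.
\end{itemize}
The possibility claim then holds on the currently unseen cards, which are the only ones for which $\mathcal{P}_v$ carries information (case (iii) of the Duality theorem).

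Your world claim needs the same care. Worlds defined as assignments of the unseen cards change domain with every card played, so $W^{(t+1)} \subseteq W^{(t)}$ only makes sense over a fixed universe, e.g.\ complete deals consistent with the observation history. In that universe, a world placing a played card with a holder other than the player who played it stays excluded by the observation, even after the purge deletes the corresponding $\mathcal{K}$ entry.

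Similarly, for the cardinality filter it is not enough that each filter pass removes worlds. You also need triggered bounds never to be retracted across time. The paper obtains this from the pigeonhole interval updates being monotone in the possible set.

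With these adjustments your argument matches the paper's, and it is more explicit than the paper about worlds and about what ``contradict'' means.
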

\begin{proof}
The adding mechanism writes
$\mathcal{K}[h][v] \leftarrow \mathcal{K}[h][v] \cup
(\mathsf{adding} \cap \mathsf{others}[v] \setminus \mathsf{played})$,
a union with a possibly empty set; the only other write is the purge
of played cards, which are observed and hence excluded for every
holder anyway. Both operations are monotone under set inclusion, so
$\mathcal{K}$ grows and its complement $\mathcal{P}_v$ shrinks
pointwise over time. The pigeonhole interval updates of
Section~\ref{sec:knowledge} are monotone functions of the remaining
possible set, so all group bounds inherit the refinement.
\end{proof}

While $\mathcal{P}$ makes per-card queries efficient, it cannot directly
express cross-card constraints such as ``player $i$ holds exactly 5~trumps.''
The pigeonhole principle bridges this gap: if $k$ cards of a suit are each
possible only for a subset of players whose combined capacity is exactly
$k$, those cards are forced to those players. A special case already
implemented in the engine is transitive closure across two players:

\begin{definition}[Transitive closure]
When two players together are known to lack every remaining card of a
suit, the third player must hold them all:
\[
\begin{array}{c}
|\mathcal{K}[a][v] \cap \mathcal{K}[b][v] \cap \mathsf{suit}(s)|
= |\mathsf{others}[v] \cap \mathsf{suit}(s)| \\[2pt]
\Longrightarrow\; \neg\mathsf{holds}(c, \overline{\mathcal{K}[a][v] \cup
\mathcal{K}[b][v]})_v
\end{array}
\]
where $\{a,b,c\} = \{0,1,2\}$ are the three player positions. This meta-rule operates on the dual, positive representation---what
players are known to hold---computed from the $\neg\mathsf{holds}$ matrix. Extension to
rational-valued $\mathcal{P}_v[c][h] \in [0,1]$ with per-card
normalization would upgrade from hard constraints to Bayesian
distributions.
\end{definition}

Algorithm~\ref{alg:belief} formalizes the per-trick belief update loop
implemented by the engine. For each trick, every participant's action is
classified relative to the lead card, rules of the matching type are
evaluated, and their exclusion sets are accumulated into a bitmask that is
then routed to the appropriate $\neg\mathsf{holds}$ matrix entries for all
three viewer perspectives.

\begin{algorithm}[t]
\caption{Runtime belief update: processing one card.}
\label{alg:belief}
\begin{algorithmic}[1]\footnotesize
\REQUIRE Partial trick $(t_1, \ldots, t_k)$, $1 \le k \le P$, viewer $v$, matrix $\mathcal{K}$, rule base $\mathcal{R}$
\ENSURE Updated $\mathcal{K}[*][v]$
\IF{$k \geq 2$}
  \STATE Classify $t_k$ relative to $t_1$: drop-suit, obey-suit, drop-trump, obey-trump
  \STATE Let $p \gets$ player of $t_k$; $S \gets \varnothing$
  \FORALL{rules $r \in \mathcal{R}$ matching the observation type}
    \IF{condition $\phi_r$ holds for viewer $v$}
      \STATE $S \gets S \cup C_r \cap \mathsf{others}[v] \cap \overline{\textsf{played}}$
    \ENDIF
  \ENDFOR
  \STATE $h \gets p$ if $r$ is responder-targeted, else $h \gets \text{issuer}$
  \STATE Write $S$ to $\mathcal{K}[h][v]$
\ENDIF
\STATE Purge: $\mathcal{K}[*][v] \gets \mathcal{K}[*][v] \setminus \textsf{played}$
\end{algorithmic}
\end{algorithm}

\subsection{Bayesian Belief Tracking}

Hard constraints determine which worlds are impossible. For the remaining
uncertainty, we maintain a continuous belief distribution over an information
set $\Omega_v$ (the worlds consistent with player $v$'s observations). Let
$\omega \in \Omega_v$ denote a complete card assignment. Initially uniform,
$P^{(0)}(\omega) = 1/|\Omega_v|$, the belief is updated after each action
$a$ (a card played by some player) via Bayes' rule:
\[
P^{(t+1)}(\omega) = \frac{P(a \mid \omega) \cdot P^{(t)}(\omega)}
{\sum_{\omega' \in \Omega_v} P(a \mid \omega') \cdot P^{(t)}(\omega')}
\]
The likelihood $P(a \mid \omega)$ is 0 if $\omega$ is inconsistent
with the action and uniform otherwise. Let $P_v[c][h]$ denote the
probability that card $c$ resides with holder $h$ from observer $v$'s
perspective.

\begin{definition}[World Consistency]
A world $\omega$ is \emph{inconsistent} with belief state $P_v$ iff
there exists a card $c$ and player $i$ such that
$c \in \mathsf{hand}_i(\omega)$ but $P_v[c][i] = 0$.
\end{definition}

\begin{definition}[Known Card]
Card $c$ is \emph{known} to be with holder $h$ iff
$P_v[c][i] = 1$ and $P_v[c][j] = 0$ for all $j \neq i$.
\end{definition}

Maintaining the full posterior explicitly is exponential in the
unseen cards, so the engine approximates the marginals directly.
Hard $\neg\mathsf{holds}$ constraints zero out impossible worlds
before sampling, and the remaining probability mass is distributed
proportionally to each player's vacant card spaces. Where cardinality
constraints apply, exchangeability within a group yields the
truncated-hypergeometric marginals of
Equation~\ref{eq:hyper}---conditioned on the current bounds, the
number of group cards held follows a truncated hypergeometric
distribution---and overlapping groups are resolved by the iterative
proportional fitting described there, which converges in 3--4
iterations in practice. The resulting per-card probabilities drive
the world filter and yield the colored belief fills shown in
Figure~\ref{fig:beliefviz}. This marginal machinery, not the full
posterior, is what the engine maintains at run time.

\subsection{Cardinality Constraints}

Boolean encodings of expert knowledge, where each proposition
$x_{ijc}$ asserts that player~$i$ holds card~$c$ from the perspective
of viewer~$j$, offer a clean logical language but require heavy
solvers for propagation. Fortunately, expert
inferences in Skat almost never demand arbitrary clauses. The required
fragment is \emph{cardinality constraints}: at-most-$k$ and
at-least-$k$ bounds on the number of cards from a designated group
that a player holds.

\begin{definition}[Cardinality Constraint]
For viewer $j \in \{0,\ldots,P-1\}$, target holder $h \in \{0,\ldots,H-1\}$
(players plus the skat),
card group $g \subseteq \{0,\ldots,D-1\}$, and bounds
$0 \le \lo_{hj}^g \le \hi_{hj}^g \le D$, a \emph{cardinality constraint}
$\lo_{hj}^g \le |H_h \cap g| \le \hi_{hj}^g$ asserts that the
number of cards from group~$g$ in holder~$h$'s hand lies within
the given interval.  For the viewer's own hand ($h=j$), the bounds
collapse to exact counts determined by the dealt cards.
\end{definition}

\begin{definition}[Group Structure]
A \emph{card group} is a bitmask $g \subseteq \{0,1\}^{D}$. A set
$\mathcal{G}$ of groups is \emph{nearly acyclic} if its members are
pairwise disjoint or nested: for any $g_1, g_2 \in \mathcal{G}$,
either $g_1 \cap g_2 = \emptyset$, $g_1 \subset g_2$, or $g_2
\subset g_1$.
\end{definition}

The default $\mathcal{G}$ contains seven groups: black Jacks
$g_{\textit{BJ}} = \{\clubsuit J, \spadesuit J\}$, red Jacks
$g_{\textit{RJ}} = \{\heartsuit J, \diamondsuit J\}$, all Jacks
$g_{\textit{J}} = g_{\textit{BJ}} \cup g_{\textit{RJ}}$, and the
plain cards of each suit $g_s = \mathsf{suitMask}[s] \setminus
g_{\textit{J}}$.  In Grand games the trump group equals
$g_{\textit{J}}$; in suit games $g_{\textit{trump}} = g_{\textit{J}}
\cup g_s$ for the trump suit~$s$.  Users may add custom groups
(e.g., black Jacks, high trump) without engine changes.  The
nesting $g_{\textit{BJ}} \subset g_{\textit{J}} \subseteq
g_{\textit{trump}}$ makes $\mathcal{G}$ nearly acyclic, so a single
forward sweep per card event suffices for group-bound consistency.

We extend the $K_{hj}$ matrix with lower and upper bound matrices
$\lo_{hj}^g$ and $\hi_{hj}^g$ for these groups. The initial bounds
are seeded from the game contract:
a Grand declarer holds at least one Jack; a suit-game declarer
holds at least four trumps (five in a hand game). All other
recipients start with the structural bounds $\lo=0$, $\hi=\vacant_h$.

Bounds tighten monotonically. After each card play, $\lo_{hj}^g$ is
the maximum of the seeded floor (adjusted for Jacks or trumps already
played by holder~$h$) and a pigeonhole lower bound:
$\vacant_h - |\text{possible}_{hj} \setminus g|$, where
$\text{possible}_{hj}$ is the set of cards not yet excluded for
holder~$h$ by the $K_{hj}$ matrix. $\hi_{hj}^g$ is
$\min(\vacant_h, |\text{possible}_{hj} \cap g|)$.
The per-card exclusion rules thus feed the intervals automatically:
every card ruled out of a group shrinks the possible set and
tightens every group bound that intersects it.

\begin{definition}[Group-Bound Consistency]
\label{def:gbc}
A belief state is \emph{group-bound consistent} if for every holder~$h$,
viewer~$j$, and group~$g$, the interval $[\lo_{hj}^g, \hi_{hj}^g]$
is the tightest possible given the current constraint set: no bound
can be raised (for~$\lo$) or lowered (for~$\hi$) without
contradicting some constraint.
\end{definition}

\begin{theorem}[One-sweep group-bound consistency]
\label{thm:sweep}
For a nearly-acyclic group family $\mathcal{G}$, processing the
groups in inclusion order (leaves before their supersets) and
applying the pigeonhole interval update once per group yields a
group-bound consistent state, provided no $\hi{=}0$ cascade fired.
If a cascade fires, a second sweep over the affected groups restores
consistency.
\end{theorem}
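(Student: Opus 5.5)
The plan is to exploit the fact that a nearly-acyclic family $\mathcal{G}$ is a laminar family. Its Hasse diagram under $\subset$ is therefore a forest, with leaves such as $g_{\textit{BJ}}, g_{\textit{RJ}}$ and the plain-suit groups $g_s$, and roots such as $g_{\textit{trump}}$. Fix a holder $h$ and viewer $j$. The constraint set that Definition~\ref{def:gbc} refers to consists of four ingredients: the possible set $\text{possible}_{hj}$ (the complement of $K_{hj}$), the vacancy $\vacant_h$, the seeded floors, and the per-group intervals. I will first argue that, for a single group $g$ taken in isolation, the pigeonhole update
\[
\lo_{hj}^g \leftarrow \max\bigl(\lo_{hj}^g,\ \vacant_h - |\text{possible}_{hj}\setminus g|\bigr),\qquad
\hi_{hj}^g \leftarrow \min\bigl(\hi_{hj}^g,\ \vacant_h,\ |\text{possible}_{hj}\cap g|\bigr)
\]
is exactly tight. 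This means I will exhibit a hand of size $\vacant_h$ drawn from $\text{possible}_{hj}$ that meets each endpoint. The construction is to fill greedily from $g$ first to realise $\hi$, and from outside $g$ first to realise $\lo$. Monotonicity of these maps (Proposition~\ref{prop:monotone}) guarantees that iterating them never loosens a bound.

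The second step handles the interaction between groups, and this is where inclusion order enters. In a laminar family, a bound on $g$ can only be influenced by bounds on groups comparable to it. A disjoint group $g'$ restricts only cards outside $g$, and those enter $g$'s bounds only through $|\text{possible}\setminus g|$, which the per-card matrix already accounts for. For nested groups I will prove the two local inequalities $\lo^{g}\ge\sum_{\text{children } c}\lo^{c}$ and $\hi^{g}\le \sum_c \hi^c + |\text{possible}\cap(g\setminus\bigcup c)|$. The proof goes by induction on the height of $g$ in the forest. When $g$ is processed, all of its children already carry final intervals, because they are leaves or were handled earlier. The child bounds can therefore be aggregated into $g$ in one step, and nothing processed later feeds back into a child. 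The same induction shows that the greedy witness from the first step can be chosen to respect every child interval simultaneously. This works because the children partition a subset of $g$, so a witness for each child can be extended to all of $g$ independently. Laminarity is precisely what makes this independent extension possible, and it gives exact tightness at every node.

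The main obstacle is the downward direction. A superset bound can force a subset bound, for instance when $\hi^{g}$ drops to $0$. Then every child must also receive $\hi=0$, and every card of $g$ becomes excluded for $h$. This changes $\text{possible}_{hj}$, which in turn changes $|\text{possible}\setminus g'|$ for unrelated groups $g'$. This is exactly the ``$\hi=0$ cascade'' named in the statement. My approach is to show that, apart from this case, a parent bound never tightens a child bound. The argument is that a parent upper bound $\hi^g>0$, together with the children's own pigeonhole bounds, always leaves a feasible witness at each child endpoint, since the greedy fill can move slack to siblings or to $g\setminus\bigcup c$. I expect to check this carefully, possibly tightening the hypothesis to ``no bound collapses to its structural extreme''.

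When a cascade does fire, the newly excluded cards form a fixed set $E$. Only groups that intersect $E$, together with their ancestors, have stale inputs. Rerunning the leaf-to-root sweep on those groups restores the invariant by the same induction. A second cascade cannot occur, because the cards of $E$ now lie in $K_{hj}$, and by monotonicity any further $\hi=0$ event would concern groups whose cards are already excluded.
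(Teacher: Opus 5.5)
There is a genuine gap. You aim for \emph{semantic} tightness: each endpoint attained by a hand consistent with $\text{possible}_{hj}$, $\vacant_h$, the seeded floors \emph{and the other groups' intervals}. But the pigeonhole update computes each interval from $\text{possible}_{hj}$ and $\vacant_h$ alone, plus its own old value, and never reads another group's interval. So the three claims meant to carry your single-group witness over to the whole family are false for this sweep.
\begin{itemize}
\item \textbf{Disjoint groups do interact, through the hand size.} In a suit game with the seeded floor $\lo^{g_{\textit{trump}}}_{hj}=4$ and $\vacant_h=10$, a side suit with seven possible cards gets $\hi^{g_s}_{hj}=7$. Yet no consistent hand holds more than $6$ of them.
\item \textbf{Child bounds are not aggregated upward.} An at-most bound $\hi^{g_{\textit{BJ}}}\le 1$ on its own leaves $\hi^{g_{\textit{J}}}=\min(\vacant_h,|\text{possible}_{hj}\cap g_{\textit{J}}|)$, which is $4$ when all Jacks are possible. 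This violates your inequality $\hi^{g}\le\sum_c\hi^{c}+|\text{possible}\cap(g\setminus\bigcup c)|$, whose right-hand side is $1+2=3$ here.
\item \textbf{Parent bounds tighten children without any $\hi=0$.} Under the paper's own ``at most five trump'' rule, the plain trump-suit group with six possible cards keeps $\hi=6$. Leaves are processed first and nothing flows downward, and this case is not a structural extreme, so your proposed extra hypothesis does not exclude it.
\end{itemize}
These examples show that, read semantically, the statement is not even true for this update. Reaching your target would need a different propagator: at least bottom-up aggregation plus a top-down pass, with the whole hand as an implicit root of size $\vacant_h$.

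The paper's proof avoids this by treating Definition~\ref{def:gbc} as a fixpoint condition on the update rules. Each interval is computed from a possible set that already reflects every per-card exclusion, so re-applying the update cannot tighten it. Only a $\hi=0$ cascade, by changing the possible set, can require a second sweep. One extra sweep is enough because any fresh $\hi=0$ in it can only come from $|\text{possible}_{hj}\cap g|=0$, and so writes no new exclusions.

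Your last paragraph is close to this, but restricting the rerun to ``groups that intersect $E$, together with their ancestors'' is wrong. The lower bound $\vacant_h-|\text{possible}_{hj}\setminus g'|$ of every group $g'$ disjoint from $E$ also changes, as you yourself note a paragraph earlier. The second sweep must therefore cover those groups too, which is why Algorithm~\ref{alg:sweep} simply repeats the full sweep.
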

\begin{proof}
A nearly-acyclic family is laminar, so the inclusion relation forms a
forest. For disjoint groups the pigeonhole lower bounds
$\vacant_h - |\text{possible}_{hj} \setminus g|$ count disjoint card
sets and cannot double-count; for nested pairs $g_1 \subset g_2$ the
bound of $g_1$ never weakens that of $g_2$, since
$|\text{possible}_{hj} \cap g_1| \le |\text{possible}_{hj} \cap
g_2|$. Leaf-first processing therefore incorporates every exclusion
exactly once: the sweep is a topological evaluation of the forest,
and each interval is computed from the possible set that already
reflects all per-card rules, so repeated application of the update rules cannot tighten
any bound further (Definition~\ref{def:gbc}). A $\hi{=}0$ cascade writes new
exclusions into the $\neg\mathsf{holds}$ matrix, changing the
possible set; because the forest has no cycles, one additional sweep
propagates those changes completely. Full arc consistency for the
global cardinality constraint would require R{\'e}gin's flow-based
propagator~\cite{regin1996gcc}; the laminar structure is what makes
the cheaper sweep exact in our setting.
\end{proof}

\begin{algorithm}[t]
\caption{One-sweep group-bound update for a card event.}
\label{alg:sweep}
\begin{algorithmic}[1]\footnotesize
\REQUIRE groups $\mathcal{G}$ in inclusion order (leaves first),
matrices $\mathcal{K}$, $\lo$, $\hi$, event card $c$
\STATE update $\mathcal{K}$ with the per-card exclusions triggered by the event
\STATE $cascade \gets \mathsf{false}$
\FORALL{$g \in \mathcal{G}$ in inclusion order}
  \STATE $p \gets |\text{possible} \cap g|$;\quad
         $u \gets |\text{possible} \setminus g|$
  \STATE $\lo^g \gets \max(\lo^g,\, \vacant - u)$;\quad
         $\hi^g \gets \min(\hi^g,\, \vacant,\, p)$
  \IF{$\hi^g = 0$}
    \STATE write all cards of $g$ into the $\neg\mathsf{holds}$ matrix
    \STATE $cascade \gets \mathsf{true}$
  \ENDIF
\ENDFOR
\IF{$cascade$} \STATE repeat the sweep once \ENDIF
\end{algorithmic}
\end{algorithm}

Collapse points are particularly valuable:
\begin{itemize}
\item $\lo = \hi$: the exact number of group cards in the hand
  is known. For the viewer's own cards this holds trivially from
  the first trick; for opponents it typically occurs in midgame.
\item $\hi = 0$: the target's hand contains no card from the
  group, which writes all remaining group cards into the
  $\nothaving$ matrix for that target---a hard exclusion cascade.
\item $\lo = |\text{possible}_{hj} \cap g|$: the target must hold
  every remaining group card, eliminating those cards for all other
  recipients.
\end{itemize}

Conditional-probability compilation follows from exchangeability within
a group. Conditioned on $\lo_{hj}^g \le m \le \hi_{hj}^g$, the number
$m$ of group cards held by holder~$h$ follows a truncated hypergeometric
distribution:
\begin{equation}
\label{eq:hyper}
P(m \mid \lo, \hi) \propto \binom{G}{m} \cdot \binom{U-G}{\vacant_h - m},
\quad \lo \le m \le \hi
\end{equation}

\begin{algorithm}[t]
\caption{Iterative proportional fitting for overlapping group bounds.}
\label{alg:ipf}
\begin{algorithmic}[1]\footnotesize
\REQUIRE marginals $P_v[c][h]$, group bounds $[\lo^g,\hi^g]$ per group $g$
\REPEAT
  \FORALL{group $g$ and holder $h$ with a bound $[\lo^g,\hi^g]$ on $h$}
    \STATE $E_g \gets \sum_{c \in g} P_v[c][h]$
    \IF{$E_g < \lo^g$} \STATE rescale the entries of $g$ in column $h$ to sum to $\lo^g$ \ENDIF
    \IF{$E_g > \hi^g$} \STATE rescale the entries of $g$ in column $h$ to sum to $\hi^g$ \ENDIF
  \ENDFOR
  \STATE renormalize every card row to sum to 1
\UNTIL{all bounds hold within tolerance, or 4 iterations}
\end{algorithmic}
\end{algorithm}

where $G = |\text{possible}_{hj} \cap g|$ is the number of possible
group cards, $\vacant_h$ the target's unknown cards, and
$U = \sum_{h} \vacant_h$ the total unknown pool. The marginal belief is
$\belief_{hjc} = \mathbb{E}[m]/G$ for each card $c\in g$. When
constraints overlap or targets compete for the same cards, we apply
iterative proportional fitting (Sinkhorn), already a natural extension
of the column-wise normalization step: one
additional rescaling pass enforces the group bound on expected
counts (Algorithm~\ref{alg:ipf}), and 3--4 iterations suffice.

The nearly-acyclic group structure makes a single forward pass
sufficient (at most two when $\hi=0$ cascades). Cardinality rules
use the same declarative format as per-card rules, adding
\emph{group}, \emph{atmost}, and \emph{atleast} fields,
and the world filter enforces them at generation time.

\section{Rule Language}
\label{sec:rule-language}

The same declarative rule base supports multiple reasoning mechanisms
beyond belief-state updates, as illustrated in Figure~\ref{fig:framework}.

\begin{definition}[Rule]
A rule is a tuple $(n, t, \phi, C, p, \gamma, d)$ where $n$ is
a unique name, $t$ is the rule type---one of $\mathsf{init}$,
$\mathsf{dropSuit}$, $\mathsf{obeySuit}$, $\mathsf{dropTrump}$,
and $\mathsf{obeyTrump}$---$\phi$ is a Boolean condition
over the predicate vocabulary, $C$ is a card set expression to exclude
from the target, $p \in \{\mathsf{init}, \mathsf{always},
\mathsf{early}, \mathsf{late}\}$ is the phase,
$\gamma \in [0,100]$ a confidence value, and $d$ is a
natural-language description.
\end{definition}

Phases are dynamic: the engine can
promote rules between phases as different card recommenders take
control---belief-based inference in the early game yields to exact
endgame solvers when the state space collapses.

The condition language uses the predicates listed in
Table~\ref{tab:predicates}. Variables $\mathsf{icard}, \mathsf{ocard}, \mathsf{mcard}$
denote the lead card, response card, and third card respectively.

\begin{table}[t]
\centering
\small
\caption{Predicate semantics and scope. Scope: a = all card games,
  t = trick-taking.}
\label{tab:predicates}
\begin{tabular}{c|l|c}
\toprule
\textbf{Predicate} & \textbf{Semantics} & \textbf{Scope} \\
\midrule
$\mathsf{suit}(c)$ & $\sigma(c)$, suit of card $c$ & a \\ \hline
$\mathsf{value}(c)$ & point value of $c$ & t \\ \hline
$\mathsf{leads}(a,b) = a$ & $a \succ b$ in trump order & t \\ \hline
$\mathsf{card\_in}(c, S)$ & $c \in S$ (set membership) & a \\ \hline
$\mathsf{played}()$ & $|\mathsf{played}|$, cards on table & a \\ \hline
$\mathsf{score}(R)$ & points won by role $R$ & t \\ \hline
$\mathsf{issuer} = R$ & who led the trick & a \\ \hline
$\mathsf{perspective} = V$ & which observer & a \\ \hline
$\mathsf{suit\_complete}(c, N)$ & $N$ non-trumps of $\sigma(c)$ seen & t \\
\bottomrule
\end{tabular}
\end{table}

\begin{example}[Core drop-suit rule]
\[
\begin{array}{ll}
\text{Name:} & \texttt{ds\_core\_no\_more\_of\_that\_suit} \\
\text{Type:} & \mathsf{dropSuit} \\
\text{Condition:} & \mathsf{true} \\
\text{Exclusion:} & \mathsf{suit}(\mathsf{icard}) \cap \overline{\mathsf{trumpmask}} \\
\text{Confidence:} & 100\% \\
\end{array}
\]
``If a player fails to follow a non-trump lead, they hold no more cards of
that suit.'' This follows directly from the rules of trick-taking and is
common to all such games.
\end{example}

\begin{example}[Score-threshold inference]
\[
\begin{array}{ll}
\text{Name:} & \texttt{ds\_score\_defense\_aces} \\
\text{Type:} & \mathsf{dropSuit} \\
\text{Condition:} & \mathsf{issuer} = \mathsf{O2} \land
  \mathsf{score}(\mathsf{D}) > 30 \land \mathsf{score}(\mathsf{O}) < 60 \\
& \land\; \mathsf{score}(\mathsf{GS}) - \mathsf{value}(\mathsf{ocard}) \geq 49 \\
\text{Exclusion:} & \mathsf{ACES} \cap \overline{\mathsf{trumpmask}} \\
\text{Confidence:} & 100\% \\
\end{array}
\]
``The defense needs $\leq 49$ points to win. A GS player wouldn't discard an
Ace if it would push them over the threshold. Therefore the dropping player
doesn't hold the Ace.''
\end{example}

\begin{example}[Human-annotated discard heuristic]
\[
\begin{array}{ll}
\text{Name:} & \texttt{hum\_discard\_ten\_no\_ace} \\
\text{Type:} & \mathsf{dropSuit} \\
\text{Condition:} & \mathsf{ocard} \in \mathsf{TENS} \cap \overline{\mathsf{trumpmask}} \\
\text{Exclusion:} & \mathsf{ACES} \cap \mathsf{suit}(\mathsf{ocard}) \\
\text{Confidence:} & 100\% \\
\end{array}
\]
``If a defender discards a Ten on a trick they cannot win, they do not hold
the Ace of that suit---otherwise they would have played the higher-value Ace
first.'' Extracted from an expert strategy document, this rule
has confidence 100\% and was promoted to the core rule set.
\end{example}

\begin{example}[Cardinality constraint]
\[
\begin{array}{ll}
\text{Name:} & \texttt{hum\_grand\_ace\_weak\_jacks\_pairwise} \\
\text{Type:} & \mathsf{cardinality} \\
\text{Condition:} & \mathsf{game} = \mathsf{Grand}
  \land \mathsf{issuer} = \mathsf{D} \\
& \land\; \mathsf{played}(\mathsf{Jacks}) = \emptyset
  \land \mathsf{icard} \in \mathsf{ACES} \setminus \mathsf{Jacks} \\
\text{Group:} & g_{\textit{BJ}} = \{\clubsuit J, \spadesuit J\} \quad\text{(black Jacks)}\\
\text{Bound:} & 0 \le |H_{\mathsf{D}} \cap g_{\textit{BJ}}| \le 1 \\
\text{Confidence:} & 80\% \\
\end{array}
\]
``If the declarer leads a side-suit Ace in a Grand game without having
pulled any Jack, he holds at most one of the two black Jacks.''
If the declarer subsequently \emph{plays} one black Jack, the initial-hand
bound subtracts the played card, closing the remaining interval automatically.
\end{example}

The extracted rules span core obligations (drop-suit, drop-trump),
value-based refinements, score-threshold heuristics---the largest
group being 67 trump-ordering refinements---entire-trump deduction,
and initialization. For sound pruning in search under imperfect
information, every rule currently carries confidence $100\%$; lower
values enable Bayesian softening.

A code generator compiles rules into belief-update functions, automatically
handling predicate translation, card set bitmask operations, and perspective
routing to the correct $\mathcal{K}[h][v]$ entries for all three observers.
Algorithm~\ref{alg:compile} formalizes this pipeline: for each combination
of observation type and trick context, rules are grouped by phase and
target, emitted as guarded bitmask updates, and wrapped with perspective
routing that distributes accumulated exclusions to all viewer-obeyer pairs.

\begin{algorithm}[t]
\caption{Code generation: DSL rules $\to$ executable inference functions.}
\label{alg:compile}
\begin{algorithmic}[1]\footnotesize
\REQUIRE Rule base $\mathcal{R}$, each $r = (n, t, \phi, C, p, \gamma, d)$
\ENSURE Generated source file, drop-in replacement for hand-written inference
\FORALL{observation type $t$ (drop-suit, obey-suit, drop-trump, obey-trump)}
  \FORALL{trick context (incomplete, complete)}
    \STATE Emit function header with implicit trick-card parameters
    \STATE Group rules by phase: early, always, late
    \FORALL{rule $r$ with $\mathsf{target} = \mathsf{obeyer}$}
      \STATE Translate $\phi_r$
      \STATE Translate $C_r$
      \STATE Emit guarded union into shared exclusion variable $\mathsf{adding}$
    \ENDFOR
    \FORALL{rule $r$ with $\mathsf{target} \neq \mathsf{obeyer}$}
      \STATE Emit direct write to $\mathcal{K}[\mathsf{target}][v]$, bypassing $\mathsf{adding}$
    \ENDFOR
    \STATE Emit routing: $\mathcal{K}[\mathsf{obeyer}][v] \gets \mathcal{K}[\mathsf{obeyer}][v] \cup \mathsf{adding}$
  \ENDFOR
\ENDFOR
\STATE Emit specialized functions for non-standard rules
\end{algorithmic}
\end{algorithm}

\begin{figure*}[t]
\centering
\begin{minipage}{0.48\textwidth}
\centering
\includegraphics[width=\textwidth]{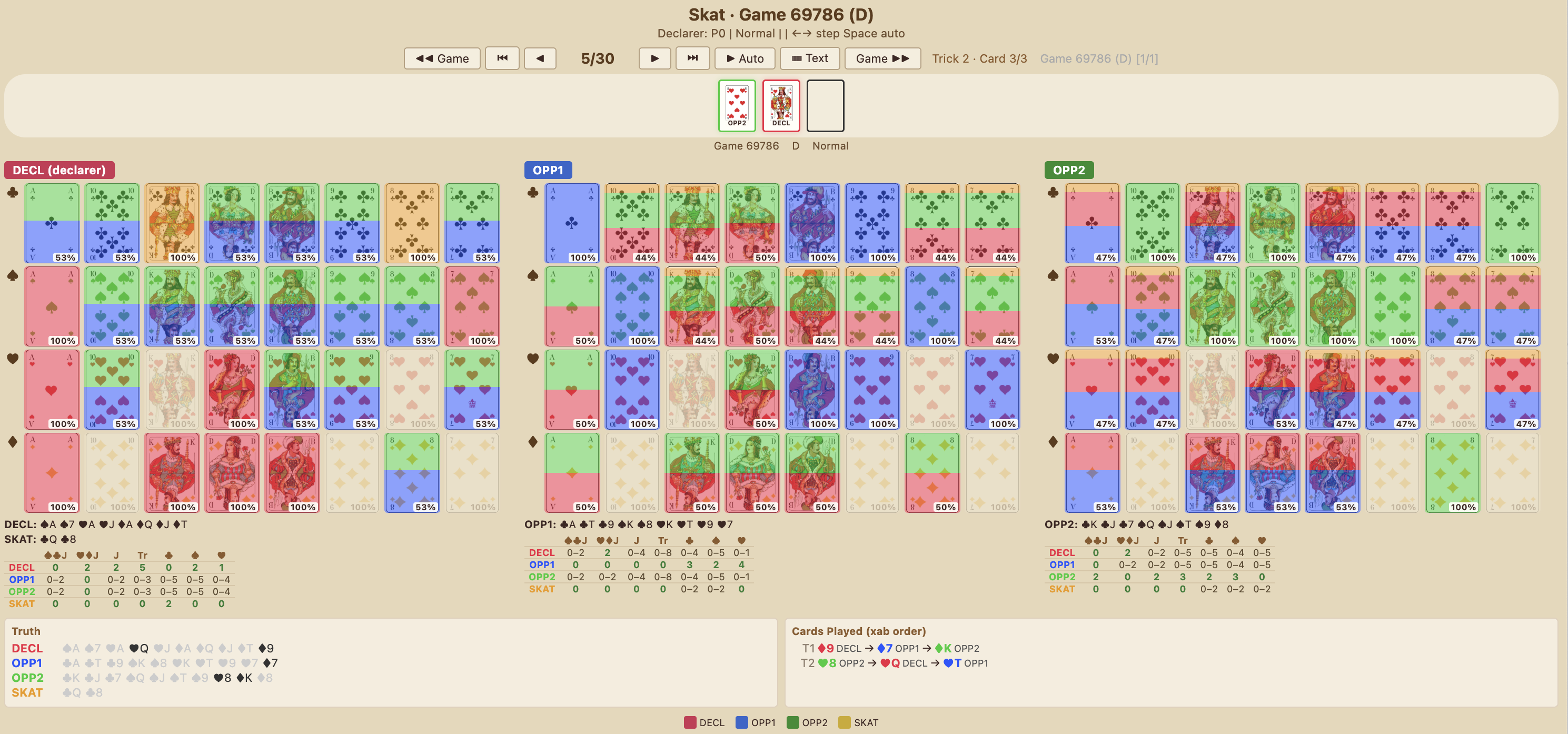}
\end{minipage}\hfill
\begin{minipage}{0.48\textwidth}
\centering
\includegraphics[width=\textwidth]{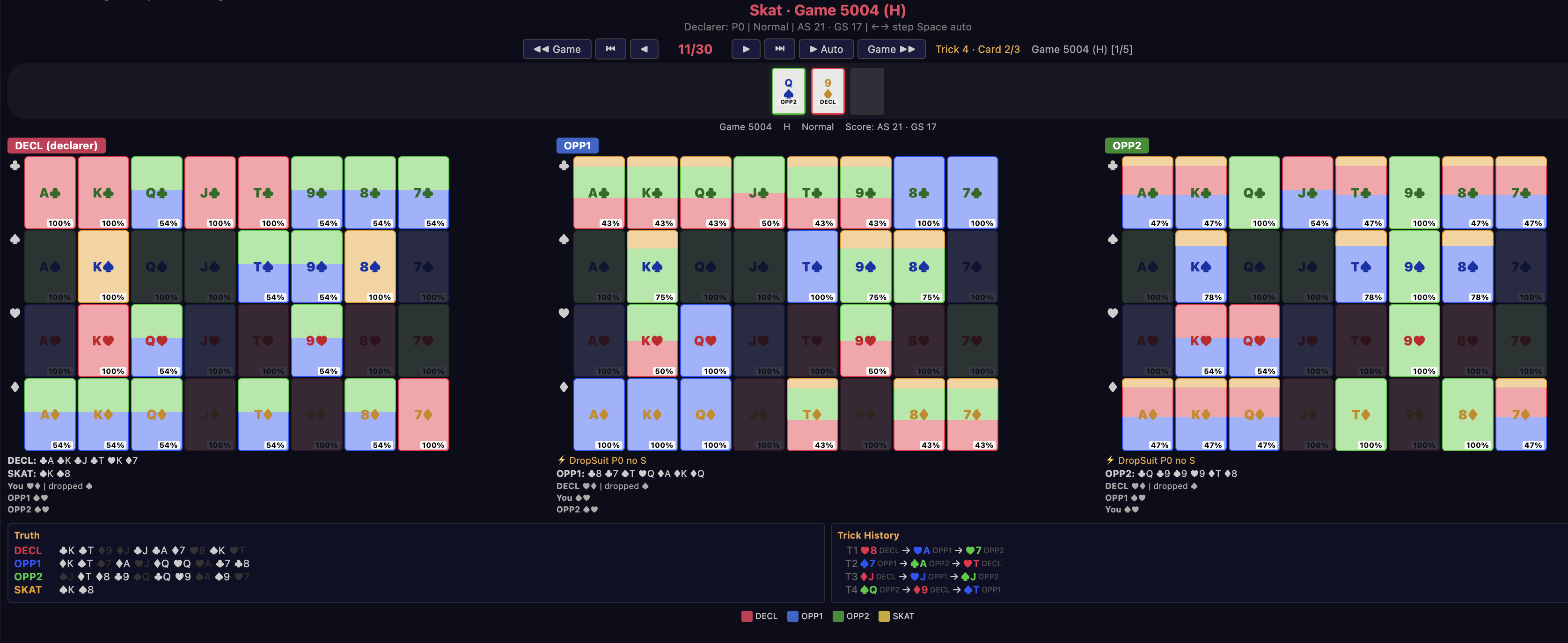}
\end{minipage}
\caption{Visualization of the belief state (two different game
snapshots). Cards are color-filled by holding probability
(red~=~declarer, blue~=~opponent~1, green~=~opponent~2, gold~=~skat).
The interval table shows group lo--hi bounds (exact in green).
Left: the cardinality group view without applied rules; right: an
annotated state with the applied rules highlighted.}
\label{fig:beliefviz}
\end{figure*}

\noindent The framework separates game-specific vocabulary from universal inference
patterns. All predicates generalize to any trick-taking game; Bridge, Belote, Hearts, and Spades differ only in
trump hierarchy and signaling conventions. The $\neg\mathsf{holds}$
constraints serve both belief-state maintenance and search---partitioning
unseen cards for the next search phase, guiding tree search with $O(1)$
follow-suit pruning, and feeding bitmask enumeration or probability matrices.

As a third layer beyond extraction and extension, we elicited knowledge
from an expert strategy document on Skat inference principles,
translating 11~additional DSL rules with LLM assistance---each manually
verified---covering trump management,
discarding conventions, and overtake heuristics. Of 10~candidate
principles, 5~translate cleanly to $\neg\mathsf{holds}$ exclusions; two
count constraints are expressible through the cardinality mechanism of
Section~4, while two distribution patterns remain deferred. The most
active rule infers that a defender who does not overtake with a trump
Ace or Ten lacks those cards.

To verify that rules fire as intended, we instrumented the code generator
to emit runtime traces: each rule logs its trigger cards, target,
excluded set, and observer perspective. A replay tool reconstructs the
game state from the original deal and annotates each trick with the rules
that fired. A side-by-side mode compares two rule configurations,
highlighting differences. This pipeline caught operator precedence bugs
and incorrect phase ordering during development. Round-trip verification
against the original engine would have been infeasible without per-trick
traceability.

\subsection{Symmetric Derivation}

Rules transfer across symmetric game positions by conjugation. Let
$\sigma$ be a \emph{structure-preserving deck automorphism}: in a
suit game, $\sigma$ permutes the three non-trump suits and fixes the
trump suit pointwise; in a Grand, $\sigma$ permutes all four suits
while preserving the Jack order. Then $\sigma$ commutes with the
suit-follow rule and with the trump order.

\begin{lemma}[Symmetric derivation]
\label{lem:symmetry}
If $\sigma$ is a structure-preserving deck automorphism and rule
$r = (n, t, \phi, C, p, \gamma, d)$ is sound, then the conjugated
rule $\sigma(r)$, obtained by replacing every card constant $c$
occurring in $\phi$ and $C$ by $\sigma(c)$, is sound, and its
exclusion set is the $\sigma$-image of the exclusion set of $r$.
\end{lemma}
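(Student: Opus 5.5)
The plan is to make "sound" precise and then transport the soundness statement along $\sigma$. I take a rule to be sound if, for every play history $\eta$ and viewer $v$ for which the rule's type $t$ matches and $\phi$ holds, every world $\omega$ that is consistent with $\eta$ (legal under the suit-follow rule and the contract) has the target holder $h$ holding no card of $C(\eta)$. That is, $\neg\mathsf{holds}(h, C(\eta) \cap \mathsf{others}[v] \setminus \mathsf{played})_v$ is true in all consistent worlds. The argument then has three steps: extend $\sigma$ to histories and worlds, check that everything the rule refers to is equivariant, and transport soundness.

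\textbf{Step 1 (extending $\sigma$).} Extend $\sigma$ from cards to worlds by applying it to every hand, and to histories by applying it to every played card. Because $\sigma$ commutes with the suit-follow rule and with the trump order, it fixes $\mathsf{trumpmask}$ setwise. It also maps each suit mask $\mathsf{suit}(s)$ to $\mathsf{suit}(\sigma(s))$, and, since it permutes suits among cards of equal rank, it maps each value set (Aces, Tens, and so on) to itself and preserves card point values. It follows that a world $\omega$ is consistent with $\eta$ if and only if $\sigma(\omega)$ is consistent with $\sigma(\eta)$. The reason is that every legality check (following the lead, trick winner via $\mathsf{leads}$, void inference) is stated in terms of suit membership and the trump order, both of which $\sigma$ respects. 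Issuer, roles, and perspective are untouched by $\sigma$, and so are scores, because values are preserved.

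\textbf{Step 2 (equivariance of predicates and card-set expressions).} By structural induction on $\phi$ and on the expression $C$, I will show that $\sigma(r)$ evaluated on $\sigma(\eta)$ agrees with $r$ evaluated on $\eta$. For conditions this means $\sigma(\phi)[\sigma(\eta)] = \phi[\eta]$; for exclusion sets it means $\sigma(C)[\sigma(\eta)] = \sigma(C[\eta])$.
\begin{itemize}
\item Base cases: the trick variables $\mathsf{icard}, \mathsf{ocard}, \mathsf{mcard}$ become $\sigma$ of themselves. A card constant $c$ is replaced by $\sigma(c)$ by construction.
\item Named sets are mapped as in Step 1, and $\mathsf{played}$ and $\mathsf{others}[v]$ map to their images.
\item Each predicate in Table~\ref{tab:predicates} is invariant. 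For $\mathsf{card\_in}$, $\sigma(c)\in\sigma(S)$ holds iff $c\in S$. For $\mathsf{leads}$, this is order preservation. For $\mathsf{value}$, $\mathsf{score}$ and $\mathsf{played}()$, it is value and cardinality preservation. For $\mathsf{suit\_complete}$, it follows because $\sigma$ is a bijection on non-trumps of a suit.
\item Inductive cases: Boolean connectives and the set operations $\cup,\cap,\overline{\,\cdot\,}$ commute with the bijection $\sigma$, so the induction goes through.
\end{itemize}
This already yields the second claim of the lemma: the exclusion set of $\sigma(r)$ is the $\sigma$-image of that of $r$.

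\textbf{Step 3 (transporting soundness).} Suppose $\sigma(r)$ fires on some history $\eta'$. Set $\eta=\sigma^{-1}(\eta')$; by Step 2, $r$ fires on $\eta$. Let $\omega'$ be any world consistent with $\eta'$. By Step 1, $\sigma^{-1}(\omega')$ is consistent with $\eta$, so by soundness of $r$ holder $h$ holds no card of $C[\eta]$ in $\sigma^{-1}(\omega')$. Applying $\sigma$, holder $h$ holds no card of $\sigma(C[\eta]) = \sigma(C)[\eta']$ in $\omega'$, which is exactly soundness of $\sigma(r)$. The routing to $\mathcal{K}[h][v]$ is unaffected, since $\sigma$ does not act on holders or viewers.

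\textbf{Expected obstacle.} The delicate part is Step 1: checking that $\sigma$ really fixes every structure the rules may mention implicitly. Examples are the contract-seeded bounds, the trump suit in suit games (which must be fixed pointwise), and the Jack order in Grand. Any predicate that names a specific suit rather than a suit variable must also have that constant conjugated. I would handle this by listing the finitely many primitive predicates and named masks and checking each one, declaring any that fails out of scope for conjugation.
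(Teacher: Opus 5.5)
Your proposal is correct and takes the same route as the paper. The paper's proof is a two-sentence sketch: soundness is a statement about move semantics, $\sigma$ commutes with those semantics, and so it transports valid inferences and exclusion sets. Your extension of $\sigma$ to worlds and histories, the structural induction over $\phi$ and $C$, and the explicit transport step are a faithful and more rigorous unpacking of that argument.

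One point in Step~1 needs tightening: the claim that $\sigma$ carries $\mathsf{suit}(s)$ to $\mathsf{suit}(\sigma(s))$. Since $\sigma$ must fix every Jack, this holds for effective (plain-card) suit masks such as $\mathsf{suit}(\mathsf{icard})\cap\overline{\mathsf{trumpmask}}$. It fails for the engine's printed-suit masks, which still contain the Jack. Those are exactly the kind of primitive your closing paragraph would have to exclude from conjugation.
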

\begin{proof}
Soundness of a rule means: whenever $\phi$ holds, no card of $C$ can
be with the target---a statement about the game's move semantics.
$\sigma$ relabels the deck while commuting with the move semantics,
so it maps valid inferences to valid inferences and exclusion sets to
exclusion sets. Derived rules inherit phase and confidence from their
source and are stored as first-class rules with a provenance note.
\end{proof}

\begin{algorithm}[t]
\caption{Symmetric derivation: expand the rule base by conjugation.}
\label{alg:symm}
\begin{algorithmic}[1]\footnotesize
\REQUIRE rule base $\mathcal{R}$, structure-preserving automorphism group $A$
\ENSURE extended rule base with conjugated rules
\FORALL{$r \in \mathcal{R}$}
  \FORALL{$\sigma \in A$}
    \STATE $r' \gets \sigma(r)$ \COMMENT{replace card constants in $\phi$ and $C$}
    \IF{$r'$ is new (not equivalent to an existing rule)}
      \STATE add $r'$ with the phase and confidence of $r$; note the provenance
    \ENDIF
  \ENDFOR
\ENDFOR
\end{algorithmic}
\end{algorithm}

In a suit game the automorphism group is the permutation group
$S_3$ on the three non-trump suits, so every suit-specific rule
yields its two sibling-suit variants; the engine applies the lemma to
derive, for example, the $\clubsuit$-void rules from the
$\heartsuit$-void rules and the red-Jack bounds from the black-Jack
bounds, which yields the 36 extended rules of
Section~\ref{sec:experiments}.

\section{Experimental Results}
\label{sec:experiments}

We use a state-of-the-art Skat solver as the baseline
\cite{edelkamp2019socs,edelkamp2020ecai,edelkamp2020dynamic,edelkamp2021paranoia,edelkamp2022hope,edelkamp2024ki,edelkamp2025outer,armbruester2026quantum}
engine. Experiments run on an Apple M3 Pro with 36\,GB RAM, compiled with
GCC~14 \texttt{-O3}. All games are played to completion with full
belief-state updates.

The 122~extracted rules regenerate belief-update code
whose game outcomes match the original on 99~of~100~games and on
965~of~1{,}000~games (96.5\%). Auditing attributes the mismatches to
three bugs in the original code---including incorrect player-view
assignments and missing game-state guards---rather than to the extracted
rules: the declarative engine reproduces the original's \emph{intended}
behavior and corrects its errors. In the single differing game of the
100-game sample (game~81, $-7$~points), the corrected view assignment
produces stronger opponent inference than the buggy original---a net
improvement in belief accuracy.

The 36~extended rules cover edge cases not reached in
smaller samples. The 11~human-annotated rules from an expert strategy
document produce no measurable harm in self-play
(Table~\ref{tab:selfplay}); additional rule extensions were evaluated
with the same outcome. As Proposition~\ref{prop:monotone} predicts,
every layer only tightens beliefs---the observed no-harm behavior is
structural, not incidental.

We evaluate play strength through self-play on
83{,}844 Skat games, distributed across 16~parallel processes on an Apple
M3~Pro, with each experiment completing in under one hour. Each game records which of the three positions
(declarer, middle hand, rear hand), yielding an 8-class
encoding: 111 means the AI wins all three positions, 000 means it loses all three.
Table~\ref{tab:selfplay} reports
the distribution for the original engine, the 122-rule engine, the engine with
11 additional human-annotated rules, and the full configuration
adding 6~cardinality constraint rules (139~rules total).

\begin{table}[t]
\centering
\caption{Self-play on 83{,}844 games. Win\% = AI wins all three positions
(111). Score = average Seeger points per game (higher is better).}
\label{tab:selfplay}
\resizebox{\columnwidth}{!}{%
\begin{tabular}{lrrrr}
\toprule
Outcome & Original & 122 rules & +Human & +Card. \\
\midrule
000 (all lose) & 5{,}008 & 4{,}913 & 4{,}893 & 4{,}708 \\
001            & 3{,}294 & 3{,}389 & 3{,}409 & 3{,}594 \\
010            &   370  &   361  &   359  &   338  \\
011            & 4{,}720 & 4{,}729 & 4{,}731 & 4{,}752 \\
100            & 6{,}083 & 5{,}890 & 5{,}886 & 5{,}593 \\
101            & 7{,}691 & 7{,}884 & 7{,}888 & 8{,}181 \\
110            & 1{,}195 & 1{,}127 & 1{,}125 & 1{,}035 \\
111 (all win)  & 54{,}081 & 54{,}149 & 54{,}151 & 54{,}241 \\
\midrule
Win\%          & 84.65 & 85.09 & 85.13 & 85.84 \\
Score          & 946.3 & 952.9 & 953.6 & 962.6 \\
\bottomrule
\end{tabular}%
}
\end{table}

The 122-rule engine improves winning percentage by $+0.44$ and average
score by $+6.6$ over the original. Adding human-annotated rules yields a
further $+0.04$ Win\% and $+0.6$ score. The cardinality configuration
adds a further $+0.71$ Win\% and $+9.0$ score, with 111 (all-win) rising
to 54{,}241.

On $83{,}844$ games the standard error of a win-rate difference is
$0.17$ percentage points, so the extracted layer ($+0.44 \pm 0.34$
pp, 95\% CI) and the cardinality layer ($+0.71 \pm 0.34$ pp) are
highly significant, while the human-annotated layer ($+0.04 \pm
0.34$ pp) is within noise on this sample---exactly the refinement
chain of Proposition~\ref{prop:monotone}: each layer is sound, and
the measurable gains come from the two data-backed layers.

Absolute win-rate gains look small because the baseline is strong and
the game is noisy: with 15 percentage points of headroom to perfect
play, $+0.71$~pp closes $4.7\%$ of the remaining error, and Skat's
deal randomness---with strong cards even weak players win---compresses
the margin available to improved inference. This is exactly why
$83{,}844$ games are needed to separate the layers statistically; at
that scale, every significant win-rate increase corresponds to
strictly superior play.

Six cardinality constraint rules were added to the
rule set: the Grand Ace-lead rules (at most two Jacks, at most one black
Jack), the declarer Ten-lead rule (at least one supporting card in the
led side suit), the skat-Ten rule (declarer drops a side suit whose Ten is
unseen $\rightarrow$ Ten in the skat), the suit-game Ace-lead rule (at
most five trump), and the first-trump-pull rule (a low opening trump
card rules out three Jacks). On a 1{,}000-game benchmark, the drop-Ten
rule fires in 612~games (most frequent), the Grand Ace-lead rule in
56~games, the Ten-support rule in 9~games, and the suit-game trump bounds
in the remaining Grand/suit deals. In a controlled A/B~test on 100~games (one
aborted), 23~of~99~played games show different play sequences.

The world filter is installed in all world-generation entry points, covering both the declarer's and
opponents' perspectives.
Bounds are maintained by a double-pass update: when $\hi=0$ triggers,
the belief entries for the affected group are zeroed and a second pass
redistributes probability mass. Custom group
masks (e.g., black Jacks) enable domain-specific constraints without
engine changes. The follow-on inference (playing one black
Jack~$\rightarrow$~remaining bound~$=0$) is automatically enforced by
subtracting already-played cards from the initial-hand bound. Figure~\ref{fig:beliefviz} shows the
resulting group-interval display.

Rule evaluation adds negligible overhead. On a 1{,}000-game
benchmark, the original processes 393~ms per game; the rule engine
396~ms per game---a factor of just $1.01{\times}$. Each rule compiles to a few bitwise
operations on a 32-bit mask. The cardinality constraint machinery and
world filter add no measurable overhead beyond the per-card rules
themselves.

\section{Reproducibility, Deployment, and Limitations}
\label{sec:repro}

\textbf{Reproducibility.} The supplementary material bundles the
declarative rule base in the DSL of Section~\ref{sec:rule-language}---the
122 extracted rules, the symmetric derivations, the extended and the
human-annotated rules, and the code generator---together with the
reported datasets as CSV (the self-play outcome distributions, the
round-trip verification, the rule-fire counts, and the A/B test) and
a rule-annotated replay in which every trick step lists the rules
that fired. All experiments ran on an Apple M3 Pro with 36\,GB RAM,
compiled with GCC~14 \texttt{-O3}, with the self-play campaigns
distributed over 16 parallel processes.

\textbf{Deployment.} The engine, its configuration, and the
interactive belief viewer are available in the supplementary material,
where the annotated replay is self-contained and does not require a
server. During development, live deployments were reachable at
\url{http://www.skat-ki.de:9080/docs/} (documentation),
\url{http://www.skat-ki.de:9080/} (interactive play and self-play
between the engines), and \url{http://www.skat-ki.de:9080/replay}
(series sheets and per-game belief viewers); these addresses may not
remain online.

\textbf{Limitations.} The rule base is currently Skat-specific:
porting the method to another game repeats the extraction and
annotation steps, though the DSL, the generator, and the cardinality
machinery carry over unchanged. The one-sweep guarantee covers
laminar group families; arbitrary overlapping groups fall back to the
iterative proportional fitting of Section~\ref{sec:knowledge}, which
converges in practice (3--4 iterations) but lacks the same a-priori
bound. All rules currently carry confidence $100\%$, so the
probabilistic softening path remains future work, as does end-to-end
learning of new rules from play data.

\section{Conclusion and Future Work}

We presented a declarative framework for belief-state inference in
trick-taking card games, centered on the $\neg\mathsf{holds}$ primitive and
a domain-specific language of predicates. A code generator translates 122
human-readable rules into efficient C++ as a drop-in replacement for the
hand-written inference logic in a production Skat engine, verified to match
the original on 965~of~1{,}000~games, with every difference traced to
audited bugs in the original. Two additional rule layers were explored:
36~extended rules and 11~human-annotated rules from an expert strategy document.
A debugging pipeline combining instrumented code generation with trick-by-trick
replay enables per-rule auditing of inference correctness.

Self-play evaluation on 83{,}844 games shows the original engine at
84.65\% win (946.3 score), the 122-rule engine at 85.09\% (952.9),
and the full 139-rule configuration with cardinality constraints at
85.84\% (962.6), with the extracted rules, the human-annotated
rules, and the cardinality constraints each contributing to the
effect.

Extending the knowledge representation with cardinality constraints on
named card groups captures inferences inexpressible as pure card
exclusion, with the nearly-acyclic group structure making a single
forward bounds-consistency pass sufficient. Custom groups and the
follow-on inference (playing one black Jack~$\rightarrow$~remaining
bound~$=0$) illustrate the expressiveness of the formalism.

The three-layer architecture generalizes to Bridge, Belote, and other
trick-taking games, and the confidence parameter provides a foundation
for Bayesian extensions and machine-discovered rules.

Two neural integration paths are under development. Path~A (NN
strengthens hard constraints): an LSTM model trained on millions of
human and computer games refines belief probabilities and writes
high-confidence outputs back to the $\neg\mathsf{holds}$ matrix,
tightening the hard constraints used by the game solver. Path~B
(hard constraints strengthen NN inference): the NN's probability
distribution drives weighted world sampling, with hard constraints
as a logical floor, but this path currently exists only as a design. Both are
compatible with the existing engine through a lightweight serialization
layer and represent the next step in combining declarative knowledge
with learned inference.

\section*{Acknowledgments}
We thank Rainer G\"o\ss{}l for seeding the work on Inner-Learning,
Hagen Raatsch for sharing his work on neural-network belief
representation, and Marcel Petr\'a\v{n} for reading and feedback on
partial-information games. The contribution was supported by GACR project 24-12046S.

\bibliographystyle{ACM-Reference-Format}
\bibliography{references}

\end{document}